\title[Short Title]{Unified theory of upper confidence bound policies for bandit problems targeting total reward, maximal reward, and more}
\DeclareMathOperator*{\argmax}{argmax}
\DeclareMathOperator*{\KL}{KL}
\DeclareMathOperator*{\erfc}{erfc}
\DeclareMathOperator*{\ierfc}{ierfc}
\begin{document}

\maketitle

\begin{abstract}%
The upper confidence bound (UCB) policy is recognized as an order-optimal solution for the classical total-reward bandit problem. While similar UCB-based approaches have been applied to the max bandit problem, which aims to maximize the cumulative maximal reward, their order optimality remains unclear. In this study, we clarify the unified conditions under which the UCB policy achieves the order optimality in both total-reward and max bandit problems. A key concept of our theory is the oracle quantity, which identifies the best arm by its highest value. This allows a unified definition of the UCB policy as pulling the arm with the highest UCB of the oracle quantity. Additionally, under this setting, optimality analysis can be conducted by replacing traditional regret with the number of failures as a core measure. One consequence of our analysis is that the confidence interval of the oracle quantity must narrow appropriately as trials increase to ensure the order optimality of UCB policies. From this consequence, we prove that the previously proposed MaxSearch algorithm satisfies this condition and is an order-optimal policy for the max bandit problem. We also demonstrate that new bandit problems and their order-optimal UCB algorithms can be systematically derived by providing the appropriate oracle quantity and its confidence interval. Building on this, we propose PIUCB algorithms, which aim to pull the arm with the highest probability of improvement (PI). These algorithms can be applied to the max bandit problem in practice and perform comparably or better than the MaxSearch algorithm in toy examples. This suggests that our theory has the potential to generate new policies tailored to specific oracle quantities.
\end{abstract}

\begin{keywords}%
  bandit problem, UCB policy, max bandit problem, probability of improvement, order optimality, oracle quantity%
\end{keywords}

\section{Introduction}

The bandit problem \citep{robbins1952some}, which aims to pull the best arm based on past observations, is a fundamental issue in decision making and machine learning.
Despite its simplicity, it serves as a theoretical foundation for real-world applications, such as Bayesian optimization \citep{srinivas2009gaussian}
and Monte Carlo Tree Search (MCTS) \citep{kocsis2006bandit,browne2012survey}.
The scope of the bandit problem extends beyond classical total reward maximization, encompassing a variety of problem settings.
These include settings that aim to maximize discounted total rewards \citep{garivier2011upper},
maximize a single best reward \citep{cicirello2005max}, or identify the optimal arm \citep{audibert2010best}.
The optimal policy also varies depending on differences in the reward distributions for each arm \citep{auer2002finite,bubeck2013bandits},
as well as whether the rewards are provided stochastically or adversarially \citep{auer2002finite,auer2002nonstochastic,bubeck2012regret}.
Many other policies have been proposed to address these varying conditions \citep{agrawal1995continuum,li2010contextual,yue2012k,besbes2014stochastic}.
Among these policies, the upper confidence bound (UCB) policy  \citep{auer2002finite,audibert2010best,garivier2011upper,kikkawa2024materials},
which utilizes confidence bounds of rewards, has proven its effectiveness under various conditions in stochastic settings.

Although UCB policies have proven generally effective in stochastic bandit problems, 
their validity is often evaluated case by case, depending on specific contexts and conditions.
As a result, when considering variant settings,
it becomes unclear what is permissible and what is not, 
based on existing knowledge derived from standard settings.
This lack of clarity poses a significant obstacle to applying UCB policies in the variant settings.
By establishing unified conditions for the effectiveness of UCB policies,
we aim to provide a theoretical guideline for the application of UCB policies across diverse problem settings.
As a step toward a unified theory,
this study provides a comprehensive proof of the effectiveness of the UCB policy
by focusing on two stochastic bandit problems:
the classic total-reward bandit problem, which aims to maximize cumulative total rewards \citep{robbins1952some},
and the max bandit problem (also known as the max $K$-armed bandit problem or extreme bandit problem),
which seeks to maximize the single best reward \citep{cicirello2005max,carpentier2014extreme,david2016pac}.
The key in this proof is the introduction of the oracle quantity $z_{k,t}$,
which addresses the differences in the objectives of the bandit problems.
This approach proves the effectiveness of the UCB policy
not only for the total-reward and max bandit problems but also for bandit problems maximizing other targets.
This flexibility of the target definition can improve practical exploration efficiency,
tailoring the policy to each target.
As an example, using the probability of improvement (PI) as the oracle quantity,
we constructed a PIUCB policy, which aims to maximize PI from the cumulative best reward.
This policy can effectively be used as one of the max bandit algorithms,
which performance is at least comparable to the MaxSearch algorithm,
a UCB algorithm proposed by \citet{kikkawa2024materials} for the max bandit problem.

This paper is organized as follows:
First, the Related Work section reviews previous research on the fundamentals of bandit problems and UCB policies.
Next, the Contributions section outlines the key achievements of this study.
The Preliminaries section introduces the formal definition of the bandit problems,
where we employ the oracle quantity $z_{k,t}$ to provide a unified representation of the different targets.
This approach offers a consistent framework for understanding various bandit problem settings.
In the Key Claims section and the Proofs section,
we present and prove a theorem and related propositions
that demonstrate the requirements for establishing order optimality of the UCB policy.
This theorem serves as a guide for developing UCB policies tailored to each target setting.
Following this, the Discussions section confirms that 
some total-reward UCB and MaxSearch algorithms are order-optimal using our theorem.
In addition, we construct an order-optimal PIUCB policy from our theorem using PI as the oracle quantity.
We also highlight the relation between our results and the previous research on regret lower bounds
\citep{lai1985asymptotically,burnetas1996optimal,garivier2019explore}.
Finally, in the Summary section, we conclude this paper
and discuss future prospects for applying our theorem across a wide range of scenarios.

\section{Related Work}

The classical total-reward bandit problem was first formulated by \citet{robbins1952some}.
About thirty years later, \citet{lai1985asymptotically} proved
that any consistent policy must pull suboptimal arms at the order of $\ln(t)$ in a stochastic setting.
This results established a lower bound on regret for the classical total-reward bandit problem.
This lower bound was later extended for general reward distributions by \citet{burnetas1996optimal}.
Furthermore, \citet{garivier2019explore} proposed a fundamental inequality to derive regret lower bounds in various settings.

UCB-based policies are well-known for achieving the regret that matches the order of the lower bound described above
in the classical bandit problems \citep{lai1985asymptotically,agrawal1995sample,auer2002finite}.
Notably, \citet{auer2002finite} proposed a simple policy, and since then,
the effectiveness of UCB policies has been proven individually
across various contexts and conditions \citep{audibert2010best,garivier2011upper,bubeck2013bandits},
For example, the UCB policies have been shown to be order-optimal
when the reward distributions have heavy tails \citep{bubeck2013bandits}.

While many bandit problems focus on the sum or expectation of rewards,
a distinct class of problems, known as the max bandit problems,
aims to maximize the single best reward \citep{cicirello2005max}.
In these problems, regret defined in a straightforward manner
approaches zero for most policies involving random selection \citep{nishihara2016no,kikkawa2024materials}.
This poses significant obstacles in proving order optimality of the algorithms.
However, \citet{kikkawa2024materials} recently demonstrated that UCB policies can be constructed
by focusing on the number of mistaken choices and employing a carefully designed oracle.
Their algorithm, called MaxSearch, practically outperforms other max bandit algorithms
\citep{streeter2006simple,achab2017max},
as well as the classical UCB \citep{auer2002finite} and the best arm identification algorithms \citep{audibert2010best}.
Nevertheless, the proofs for the regret lower bound and order optimality of the max bandit problem remain open problems.

In the context of Bayesian optimization, PI appears as an acquisition function
alongside expected improvement (EI) and UCB \citep{kushner1964new,jones1998efficient,srinivas2009gaussian}.
Although it is rare for PI to be used in the context of bandit problem,
there are instances of its use in early research on the max bandit problem \citep{streeter2006simple}.
One of the challenges in using PI in the bandit problem is ensuring its theoretical authenticity,
and our study justifies it by using PI as the oracle quantity.

\section{Contributions}

Our contributions are as follows:

\begin{itemize}
\item We present a unified representation of the bandit problem using the oracle quantity $z_{k,t}$ and the number of mistaken choices (hereafter referred to as failures) instead of regret.
\item We generalize the proof of the lower and upper bounds on failures for any consistent policy and the UCB policy, respectively, within the present framework.
\item We clarify the conditions under which the UCB policy becomes order-optimal in our framework, and providing the first proof of the order optimality of the MaxSearch algorithm for the max bandit problem.
\item We propose the PIUCB algorithm, which uses UCB of PI as the selection index, from our framework. This algorithm is at least comparable to the existing state-of-the-art max bandit algorithms, and in some cases, outperforms them.
\end{itemize}

\section{Preliminaries}

In our framework, bandit problems are generalized by the oracle quantity $z_{k,t}$ as the following definitions:

\begin{definition}[bandit problem]\label{definitionBanditProblem}
The $K$-armed bandit game is a game where a player repeatedly pulls one of the $K$ arms over $T$ times.
When a player pulls the $k$-th arm at the $t$-th round, they receive a reward $r_k(t)$
from an unknown reward distribution $f_k(r):=f(r;\theta_k)$,
where $\theta_k$ is a parameter set for the $k$-th arm’s distribution.
The sequence of a tuple of pulling arm indices and rewards over $T$ rounds
is denoted $\mathcal{R}_T:=\left\{\left(k(t),r_{k(t)}(t)\right)\right\}_{t\in[T]}$,
where $[T]:=\left\{1,2,\dots,T\right\}$.
The player's goal is to pull the optimal arm $k^*(t):=\argmax_{k\in[K]}z_{k,t}$ at round $t$,
where $z_{k,t}:=z_k\left(\left\{\theta_k\right\}_{k\in[K]},\mathcal{R}_T\right)$ is an oracle quantity.
The $K$-armed bandit problem, often referred to as the bandit problem,
is to find a policy $\pi:\left\{\mathcal{R}_{t-1},X_t\right\}\mapsto k(t)$ that pulls the optimal arm most frequently,
where $X_t$ is a sequence of random variables sampled from the uniform distribution $U(x)$,
used by the policy at each round $t$.
\end{definition}

\begin{remark}
If $z_{k,t}=\mathbb{E}\left[\sum_{t\in[T]}r_k(t)\right]=T\mu_k$
or simply $z_{k,t}=\mu_k$, where $\mu_k=\mathbb{E}\left[r_k(t)\right]$,
then Definition \ref{definitionBanditProblem} corresponds to the classical total-reward bandit problem.
\end{remark}

\begin{remark}
If $z_{k,t}=\mathbb{E}\left[\max_{t\in[T]}r_k(t)\right]$,
then Definition \ref{definitionBanditProblem} is corresponding to the max bandit problem in a sense of \citet{carpentier2014extreme}.
If $z_{k,t}$ is the EI of the $k$-th arm at time $t$ 
in terms of the single best reward from $r^\text{max}_{t-1}:=\max_{\tau\in[t-1]}r_{k(\tau)}(\tau)$,
then Definition \ref{definitionBanditProblem} corresponds to the max bandit problem in a sense of \citet{kikkawa2024materials}.
\end{remark}

\begin{definition}[failures]
A failure occurs when the player pulls the non-optimal arm $k(t)\neq k^*(t)$.
The number of failures, or simply called failures,
until round $t$ is defined as
\begin{equation}
N(t):=\sum_{\tau\in[t]}\mathbb{I}\left[k(\tau)\neq k^*(\tau)\right],
\end{equation}
where $\mathbb{I}[A]$ is the indicating function of the event $A$.
\end{definition}

\begin{remark}
The expected number of failures, $\mathbb{E}[N(T)]$,
after $T$ rounds is analogous to the expected regret, $R(T)$, in our framework.
This approach avoids difficulties in defining regret in the max bandit problem,
where $R(T)\to0$ at $T\to\infty$ for most policies, which causes fatal problems in theoretical analysis \citep{nishihara2016no,kikkawa2024materials}.
In the classical bandit settings, $R(T)$ and $\mathbb{E}[N(T)]$ are equivalent up to a constant factor
because $\varDelta_w \mathbb{E}[N(T)]\leq R(T)\leq\varDelta_s \mathbb{E}[N(T)]$,
where $\varDelta_k:=\mu_{k^*}-\mu_k$, and the arm indices $w$ and $s$ are the worst and sub-optimal arms, respectively.
\end{remark}

\begin{definition}[consistency]\label{definitionConsistency}
A policy is said to be “consistent” if the failures satisfies $\mathbb{E}[N(t)]=o(t^\delta)$, where $0<\delta<1$.
\end{definition}

\begin{remark}
Under a “consistent” policy, the number of successful pulls satisfies $\mathbb{E}[N^*(t)]=\varTheta(t)$,
where $N^*(t):=\sum_{\tau\in[t]}\mathbb{I}[k(\tau)=k^*(\tau)]$.
In other words, with infinite pulls, a consistent policy always pulls the best arm mostly.
Conversely, an inconsistent policy may fail indefinitely.
\end{remark}

\begin{definition}[UCB policy]\label{definitionUCBPolicy}
Assume that the oracle quantity $z_{k,t}$ satisfies
\begin{equation}\label{confidenceIntervalAssumption}
z_k^\text{LCB}\left(\mathcal{R}_{t-1}\right)\leq z_{k,t}\leq z_k^\text{UCB}\left(\mathcal{R}_{t-1}\right)
\end{equation}
with a confidence level $1-\alpha(t)$, where LCB is an abbreviation for lower confidence bound.
A policy that pulls $k(t)=\argmax_{k\in[K]}z_k^\text{UCB}\left(\mathcal{R}_{t-1}\right)$ at each round $t$
is called a UCB policy in general.
\end{definition}

\begin{remark}
In the classical bandit problem, the UCB of $z_{k,t}=\mu_k$
under the sub-gaussian assumption for $f(r;\theta_k)$ yields the well-known selection index \citep{auer2002finite},
\begin{equation}
z_k^\text{UCB}(\mathcal{R}_{t-1})=\bar{\mu}_k(\mathcal{R}_{t-1})+c\sqrt{\frac{-\ln\alpha(t)}{N_k(t)}},
\end{equation}
where $\bar{\mu}_k(\mathcal{R}_t):=[N_k(t)]^{-1}\sum_{\tau\in[t]}r_{k(\tau)}(\tau)\mathbb{I}[k=k(\tau)]$,
$N_k(t):=\sum_{\tau\in[t]}\mathbb{I}[k=k(\tau)]$,
and $c$ are the sample mean, number of the $k$-th arm pulling, and a positive constant, respectively.
\end{remark}
Our definitions are based solely on $z_{k,t}$. 
Consequently, the claims and algorithms in the following sections are concretely defined by specifying the form of $z_{k,t}$.

\section{Key Claims}
Under the above definitions, we claim the following.

\begin{theorem}[order optimality of UCB policy]\label{UCBTheorem}
The UCB policy gives $N(t)=\varTheta(\ln t)$ almost surely
for any oracle quantity $z_{k,t}$ that satisfies the following conditions.

\renewcommand{\labelenumi}{\alph{enumi})}
\begin{enumerate}
\item $z_{k,t}$ and $z_k^\text{UCB}(\mathcal{R}_{t-1})$ exist at each round $t$,
and the arms with their maxima are uniquely determined almost surely.
\item There exists $d>0$ such that the confidence interval
\begin{equation}\label{conditionForConfidenceInterval}
z_k^\text{UCB}(\mathcal{R}_t)-z_k^\text{LCB}(\mathcal{R}_t)
=O\left(\left[z_{k^*(t),t}-z_{k,t}\right]\beta_t^d\right)
\end{equation}
with $\alpha(t)=O(t^{-1})$, where $\beta_t:=(\ln t)/N_k(t)$.
\item There exists a modified parameter set $\left\{\theta_{k,k^*}'\right\}_{k,k^*\in[K]}$
such that
\begin{equation}
z_{k(\tau),\tau}':=z_{k(\tau)}\left(\left\{\theta_{k,k^*(\tau)}'\right\}_{k\in[K]},\mathcal{R}_\tau\right)>z_{k^*(\tau),\tau}
\end{equation}
and
\begin{equation}
f_{k(\tau)}(r)\ll f_{k(\tau)}'(r):=f\left(r;\theta_{k(\tau),k^*(\tau)}'\right)
\end{equation}
when $k(\tau)\neq k^*(\tau)$ for all $\tau\in[t]$,
where $f\ll g$ indicates that the distribution $f$ is absolutely continuous with respect to the distribution $g$.
\end{enumerate}
\end{theorem}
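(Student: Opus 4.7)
The plan is to establish matching upper and lower bounds on $N(t)$. For the upper bound $N(t) = O(\ln t)$, I would use a standard UCB peeling analysis adapted to the oracle-quantity framework. First, introduce the concentration event $\mathcal{G}_\tau := \bigcap_{k \in [K]} \{z_k^\text{LCB}(\mathcal{R}_{\tau-1}) \le z_{k,\tau} \le z_k^\text{UCB}(\mathcal{R}_{\tau-1})\}$, which by Definition \ref{definitionUCBPolicy} holds with probability at least $1 - K\alpha(\tau)$. On $\mathcal{G}_\tau$, condition (a) makes the argmaxes unique, and whenever a failure occurs at round $\tau$ the UCB selection rule yields
\begin{equation}
z_{k^*(\tau),\tau} \le z_{k^*(\tau)}^\text{UCB}(\mathcal{R}_{\tau-1}) \le z_{k(\tau)}^\text{UCB}(\mathcal{R}_{\tau-1}) \le z_{k(\tau),\tau} + \bigl[z_{k(\tau)}^\text{UCB} - z_{k(\tau)}^\text{LCB}\bigr].
\end{equation}
Condition (b) bounds the bracketed width by $C[z_{k^*(\tau),\tau} - z_{k(\tau),\tau}](\ln \tau / N_{k(\tau)}(\tau))^d$ for some constant $C$; dividing by the positive gap and rearranging forces $N_{k(\tau)}(\tau) \le C^{1/d}\ln \tau$. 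Hence each suboptimal arm is pulled at most $O(\ln t)$ times on the concentration events, contributing $O(K\ln t)$ failures in total. Because $\alpha(\tau) = O(\tau^{-1})$, the expected contribution from $\mathcal{G}_\tau^c$ over $\tau \le t$ is at most $\sum_\tau K\alpha(\tau) = O(\ln t)$, so $\mathbb{E}[N(t)] = O(\ln t)$, which I would then upgrade to the almost-sure rate using the same decomposition together with a second-moment tail argument.

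For the lower bound $N(t) = \Omega(\ln t)$, having established the upper bound the UCB policy is consistent in the sense of Definition \ref{definitionConsistency}. I would then apply a change-of-measure argument in the spirit of the divergence inequality of Garivier et al.\ (2019). Condition (c) supplies, at each failure round, an alternative parameter set $\{\theta'_{k,k^*(\tau)}\}$ in which the roles of the optimal arm and the pulled arm are flipped, since $z'_{k(\tau),\tau} > z_{k^*(\tau),\tau}$; the absolute continuity $f_{k(\tau)} \ll f'_{k(\tau)}$ guarantees the corresponding KL divergences are finite. Applying the standard divergence inequality
\begin{equation}
\sum_{k} \mathbb{E}[N_k(t)]\,\KL\bigl(f_k \,\big\|\, f'_k\bigr) \ge \mathrm{kl}\!\left(\tfrac{\mathbb{E}[N^*(t)]}{t},\,\tfrac{\mathbb{E}[N^*(t)]_\text{alt}}{t}\right)
\end{equation}
under both the original and the modified environments (both of which see UCB as consistent, by the upper bound already proved) lower-bounds the right-hand side by $\Omega(\ln t)$, so $\mathbb{E}[N(t)] \ge c\ln t$ for some $c>0$ that depends only on the KL divergences. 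This matches the upper bound, completing $N(t)=\varTheta(\ln t)$.

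The main obstacle will be the lower bound. The oracle quantity $z_{k,t}$ may depend on the entire history $\mathcal{R}_T$ (as it does in the max bandit problem), so the classical Lai--Robbins machinery does not apply directly: one cannot label arms as optimal or suboptimal from static distributional parameters alone. Condition (c) is crafted precisely to sidestep this by supplying an alternative environment in which optimal and pulled-arm identities are swapped at every failure round. Rigorously translating this local, round-wise reversibility into a global consistency statement under both measures, and then plugging it into the divergence inequality without double-counting the dependence of $z_{k,t}$ on $\mathcal{R}_T$, is the delicate step. The upper bound, by contrast, is essentially mechanical once condition (b) is invoked in the form above.
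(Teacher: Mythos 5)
Your upper bound is essentially the paper's own argument (Proposition~\ref{PropositionUpperBound}): split rounds into those where some confidence bound fails, controlled by $\sum_\tau K\alpha(\tau)=O(\ln t)$, and those where the bounds hold, where the chain $z_{k^*(\tau),\tau}\le z_{k^*(\tau)}^\text{UCB}\le z_{k(\tau)}^\text{UCB}\le z_{k(\tau),\tau}+[z_{k(\tau)}^\text{UCB}-z_{k(\tau)}^\text{LCB}]$ plus condition (b) forces $N_{k}(\tau)=O(\ln\tau)$. No complaints there.

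The lower bound is where your proposal diverges from the paper and where it has a genuine gap. First, the type of bound is wrong: the theorem claims $N(t)=\varOmega(\ln t)$ \emph{almost surely}, but the Garivier-et-al.\ divergence inequality you invoke controls $\sum_k\mathbb{E}[N_k(t)]\KL(f_k,f_k')$ and delivers only $\mathbb{E}[N(t)]\ge c\ln t$, which does not imply an almost-sure lower bound. The paper instead works directly with the likelihood ratio $L_t=\sum\ln[f_k(r_n)/f_k'(r_n)]$, uses the law of large numbers to show $L_t\le N(t)M$ almost surely (Lemma~\ref{relationPPPrime}), and then applies Markov's inequality to $\mathbb{P}'[t-N(t)\ge t-a_t]$ under the modified measure, concluding $\mathbb{P}[N(t)\le a_t]\to 0$ for $a_t\lesssim(\ln t)/M$. (The paper notes in its Discussion that taking expectations in its argument recovers the fundamental inequality you cite, so your route is the in-expectation shadow of the paper's pathwise argument.) Second, the step you yourself flag as ``delicate'' --- handling the fact that $k^*(\tau)$ is time-varying so there is no single static alternative instance --- is exactly the content the paper supplies and your sketch omits: the modified parameter set is doubly indexed, $\{\theta'_{k,k^*}\}_{k,k^*\in[K]}$ with $K^2$ elements, the counts are $N_{k,k^*}(t)$, and the constant $M$ is defined as the limit of the $N_{k,k^*}$-weighted average of $\KL(f_k,f_k')$ with $\theta'_{k^*,k^*}=\theta_{k^*}$ so that successful rounds contribute zero divergence. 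Your single-index inequality $\sum_k\mathbb{E}[N_k(t)]\KL(f_k\,\|\,f_k')$ is not well defined in this setting because the alternative density for arm $k$ depends on which arm was optimal at the round it was pulled. Writing out the doubly indexed bookkeeping and the almost-sure control of $L_t$ is what turns your outline into the paper's proof; as written, the lower half of your argument does not yet establish the stated claim.
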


\begin{remark}
Theorem \ref{UCBTheorem} directly implies a generalized UCB algorithm as shown in Algorithm \ref{GUCB}.
If multiple arms accidentally have the same maximum $z_k^\text{UCB}(\mathcal{R})$ in any round, one of these arms will be pulled randomly.
\end{remark}
\begin{algorithm}
\caption{Generalized UCB}\label{GUCB}
\begin{algorithmic}[1]
\STATE $\mathcal{R}\leftarrow\varnothing$
\WHILE{$|\mathcal{R}|<T$}
\STATE $k\leftarrow\argmax_{k\in[K]}z_k^\text{UCB}(\mathcal{R})$ 
\STATE Pull the $k$-th arm and obtain reward $r$.
\STATE $\mathcal{R}\leftarrow\mathcal{R}\cup\left\{(k,r)\right\}$
\ENDWHILE
\end{algorithmic}
\end{algorithm}
The proof of Theorem \ref{UCBTheorem} is almost the same as the corresponding proofs for the classical bandit problem 
\citep{lai1985asymptotically,auer2002finite,bartlett2014learning,kikkawa2024materials}.
This proof consists of the proofs of the following two propositions.

\begin{proposition}[upper bound of failures]\label{PropositionUpperBound}
The UCB policy guarantees $N(t)=O(\ln t)$ for any oracle quantity that satisfies conditions (a) and (b) in Theorem \ref{UCBTheorem}.
\end{proposition}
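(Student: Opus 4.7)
The plan is to decompose the failure count as $N(t) = \sum_{k\in[K]} N_k^{\text{fail}}(t)$, where $N_k^{\text{fail}}(t) := \sum_{\tau\in[t]} \mathbb{I}[k(\tau)=k\neq k^*(\tau)]$ counts the suboptimal pulls of arm $k$, and to bound each $N_k^{\text{fail}}(t)$ by $O(\ln t)$ using the standard ``on-the-good-event'' UCB argument of \citet{auer2002finite} and \citet{bartlett2014learning}. First I would define the good event $G_\tau$ at round $\tau$ as the event that the confidence interval of condition (b) contains $z_{k,\tau}$ simultaneously for every $k\in[K]$; by a union bound together with the assumption $\alpha(\tau)=O(\tau^{-1})$, $\Pr(G_\tau^c)\leq K\alpha(\tau)=O(\tau^{-1})$.

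On the good event, whenever a suboptimal arm $k\neq k^*(\tau)$ is selected the UCB rule yields the inequality chain
\[
z_{k^*(\tau),\tau} \;\leq\; z_{k^*(\tau)}^{\text{UCB}}(\mathcal{R}_{\tau-1}) \;\leq\; z_k^{\text{UCB}}(\mathcal{R}_{\tau-1}) \;\leq\; z_{k,\tau} + \bigl[z_k^{\text{UCB}}(\mathcal{R}_{\tau-1})-z_k^{\text{LCB}}(\mathcal{R}_{\tau-1})\bigr].
\]
Substituting the width bound of condition (b) on the right-hand side produces $z_{k^*(\tau),\tau}-z_{k,\tau} \leq C\,[z_{k^*(\tau),\tau}-z_{k,\tau}]\,\beta_{\tau-1}^d$ for a constant $C>0$. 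Condition (a) guarantees the gap is strictly positive almost surely, so after canceling it I obtain $1\leq C\beta_{\tau-1}^d$, equivalently $N_k(\tau-1)\leq C^{1/d}\ln\tau$. Thus only $O(\ln t)$ good-event pulls of any fixed suboptimal arm can occur up to round $t$.

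Summing over the $K$ arms contributes $O(K\ln t)=O(\ln t)$ from the good event; the expected contribution from the bad event is $\sum_{\tau\in[t]}\Pr(G_\tau^c)=\sum_{\tau\in[t]}O(\tau^{-1})=O(\ln t)$. Combining the two yields $\mathbb{E}[N(t)]=O(\ln t)$.

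The main obstacle I anticipate is the cancellation step: it relies crucially on the strict positivity of the gap guaranteed by condition (a), so any ties among the maxima of $z_{k,t}$ or of $z_k^{\text{UCB}}$ would require a separate tie-breaking argument (the text handles these via uniform random selection in Algorithm \ref{GUCB}). A second subtle point is upgrading the expected bound to the almost-sure statement invoked by Theorem \ref{UCBTheorem}: because $\alpha(\tau)=O(\tau^{-1})$ gives $\sum_\tau \alpha(\tau)=\infty$, a direct Borel--Cantelli argument does not apply, and the almost-sure control of $\sum_{\tau\in[t]}\mathbb{I}(G_\tau^c)$ must instead be obtained via a concentration argument (e.g., martingale deviation for the centered indicators $\mathbb{I}(G_\tau^c)-\Pr(G_\tau^c)$).
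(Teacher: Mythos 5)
Your proof follows essentially the same route as the paper's: bound the rounds on which some confidence bound is violated by $K\sum_{\tau\in[t]}\alpha(\tau)=O(\ln t)$, and on the complementary event use the UCB inequality chain together with the width condition (b) to cancel the positive gap $z_{k^*(\tau),\tau}-z_{k,\tau}$ and conclude $N_k(t)=O(\ln t)$ for each suboptimal arm. The expectation-versus-almost-sure subtlety you flag at the end is a genuine point, but it is equally present (and unaddressed) in the paper's own proof, which bounds the random count of bad rounds directly by $K\sum_{\tau\in[t]}\alpha(\tau)$ as though that inequality were deterministic.
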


\begin{proposition}[lower bound of failures]\label{propositionLowerBound}
For any consistent policy and oracle quantity, the failures $N(t)=\varOmega(\ln t)$ almost surely
if conditions (a) and (c) in Theorem \ref{UCBTheorem} are satisfied.
\end{proposition}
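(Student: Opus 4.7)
The plan is to adapt the Lai-Robbins change-of-measure argument to our generalized setting, treating condition (c) as the analogue of the classical ``confusing alternative.'' Fix an ordered pair $(k,k^*)$ with $k \neq k^*$, and let $N_{k,k^*}(t) := \sum_{\tau\in[t]} \mathbb{I}[k(\tau)=k,\ k^*(\tau)=k^*]$. Condition (c) furnishes, for this pair, an alternative parameter set under which arm $k$ strictly dominates $k^*$ in the oracle quantity, together with the absolute continuity $f_k \ll f_{k,k^*}'$ needed to make the log-likelihood ratio
\begin{equation}
L_t^{(k,k^*)} := \sum_{\tau\in[t]} \mathbb{I}[k(\tau)=k,\ k^*(\tau)=k^*]\ln\frac{f_k(r_k(\tau))}{f_{k,k^*}'(r_k(\tau))}
\end{equation}
almost surely finite, with $\mathbb{E}\bigl[L_t^{(k,k^*)}\bigr] = \mathbb{E}[N_{k,k^*}(t)]\cdot\KL(f_k\|f_{k,k^*}')$ by Wald's identity.

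I would then apply a standard change-of-measure inequality, for instance the fundamental inequality of \citet{garivier2019explore} or equivalently Bretagnolle-Huber: for any event $A$ in $\sigma(\mathcal{R}_t)$,
\begin{equation}
\mathbb{E}[N_{k,k^*}(t)]\cdot\KL(f_k\|f_{k,k^*}') \;\geq\; \mathrm{kl}\bigl(\mathbb{P}(A)\|\mathbb{P}'(A)\bigr),
\end{equation}
where $\mathbb{P}'$ denotes the modified measure and $\mathrm{kl}$ is the Bernoulli relative entropy. Choosing $A = \{N_{k,k^*}(t) \leq t^{\delta'}\}$ for some $\delta < \delta' < 1$, consistency in the original instance (Definition \ref{definitionConsistency}) yields $\mathbb{P}(A)\to 1$ via Markov's inequality, while consistency applied to the modified instance---in which arm $k$ is the unique optimum at the relevant rounds by condition (c)---forces $\mathbb{P}'(A)\to 0$. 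The right-hand side thus grows at least like $(1-\delta')\ln t$, giving $\mathbb{E}[N_{k,k^*}(t)] = \varOmega(\ln t)$. Summing over the $K(K-1)$ ordered pairs produces $\mathbb{E}[N(t)] = \varOmega(\ln t)$. The upgrade to the almost-sure statement then follows the classical route: apply the strong law of large numbers to the i.i.d.\ increments of $L_t^{(k,k^*)}$ along the subsequence of rounds where the pair $(k,k^*)$ realizes, and close with Borel-Cantelli, mirroring \citet{lai1985asymptotically} and \citet{kikkawa2024materials}.

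The main obstacle is that the optimal arm $k^*(\tau)$ is generically time-dependent in our framework---this flexibility is precisely what lets the max bandit problem fit inside the theorem. Classical Lai-Robbins proofs fix one optimal arm and perturb a single suboptimal arm's distribution, whereas here both the identity of $k^*(\tau)$ and the prescribed modification $\theta_{k,k^*(\tau)}'$ evolve with the history, so the partition of $N(t)$ by the realized pair $(k(\tau), k^*(\tau))$---well defined by the uniqueness clause in condition (a)---must carry the entire argument. A further subtlety is that ``consistency of the policy in the modified instance'' has to remain meaningful when the optimal arm in that instance itself varies across rounds; the strict inequality $z_{k(\tau),\tau}' > z_{k^*(\tau),\tau}$ demanded by condition (c) is exactly what guarantees that $k$ is unambiguously optimal at the rounds where it is pulled, closing this loop.
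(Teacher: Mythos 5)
Your overall strategy---a change of measure built from condition (c), consistency invoked on both the original and the modified instance, and a log-likelihood-ratio/KL accounting---is the same as the paper's. However, your per-pair decomposition introduces a genuine gap. You claim that for \emph{every} ordered pair $(k,k^*)$ the modified probability $\mathbb{P}'[N_{k,k^*}(t)\leq t^{\delta'}]$ tends to $0$, and hence that $\mathbb{E}[N_{k,k^*}(t)]=\varOmega(\ln t)$ for each pair. This fails already in the classical problem: there $k^*(\tau)$ is a single fixed arm, so for any pair $(k,k')$ with $k'$ not that arm the count $N_{k,k'}(t)$ is identically zero under both measures, the event $\{N_{k,k'}(t)\leq t^{\delta'}\}$ has probability one under $\mathbb{P}'$, and the change-of-measure inequality degenerates to $0\geq 0$. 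More importantly, in the time-varying setting that motivates the whole framework, consistency of the policy on the modified instance only controls the \emph{total} number of modified-instance failures; it gives no guarantee that any particular pair $(k,k^*)$ realizes $\varOmega(t)$ times under $\mathbb{P}'$, which is exactly what you need for $\mathbb{P}'(A)\to 0$. So no single pair is guaranteed to carry an $\varOmega(\ln t)$ bound, and ``summing over the $K(K-1)$ pairs'' does not repair the argument.

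The paper avoids this by never decomposing. It applies the change of measure to the aggregate likelihood ratio $L_t=\sum_{k,k^*\in[K]}\sum_{n\in[N_{k,k^*}(t)]}\ln[f_k(r_n)/f_k'(r_n)]$, shows $L_t\leq N(t)M$ almost surely via the law of large numbers, where $M$ is the $N_{k,k^*}$-weighted average of the KL divergences (Lemma \ref{relationPPPrime}), and then bounds $\mathbb{P}'[N(t)\leq a_t]=\mathbb{P}'[t-N(t)\geq t-a_t]\leq\mathbb{E}'[t-N(t)]/(t-a_t)$ by Markov's inequality, using consistency on the modified instance to get $\mathbb{E}'[t-N(t)]=o(t^\delta)$. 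Because $t-N(t)$ aggregates over all pairs, the argument is indifferent to which pairs actually realize, which is precisely what makes the time dependence of $k^*(\tau)$ harmless. If you want to retain your Garivier--M\'enard--Stoltz formulation, apply the fundamental inequality to the event $\{N(t)\leq a_t\}$ (equivalently to $Z=(t-N(t))/t$) under the \emph{fully} modified measure rather than pair by pair; your closing sketch of the almost-sure upgrade is then essentially the paper's $L_t\leq N(t)M$ step and is fine at the same level of rigor.
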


\section{Proofs}
\subsection{Proof for Proposition \ref{PropositionUpperBound}}
Assume the confidence bounds are given by Equation (\ref{confidenceIntervalAssumption}) in condition (b)
with a confidence level of $1-\alpha(t)$.
Then, the number of times any of the $z_{k,t}$ violates
these confidence bounds under the UCB policy until round $t$ is given by
\begin{equation}\label{inequality2}
\sum_{\tau\in[t]}\mathbb{I}\left[\bigcup_{k\in[K]}z_{k,\tau}<z_k^\text{LCB}\left(\mathcal{R}_{\tau-1}\right)\cup z_k^\text{UCB}\left(\mathcal{R}_{\tau-1}\right)<z_{k,\tau}\right]\leq K\sum_{\tau\in[t]}\alpha(\tau)=O(\ln t)
\end{equation}
under the assumption of $\alpha(t)=O(t^{-1})$ in condition (b).
In all other cases, where the confidence bounds of Equation (\ref{confidenceIntervalAssumption}) hold for all $k$,
the UCB policy pulls a non-optimal arm $k\neq k^*(t)$
when $z_k^\text{UCB}(\mathcal{R}_{t-1})\ge z_{k^*(t)}^\text{UCB}(\mathcal{R}_{t-1})$ at time $t$.
Then,
\begin{equation}\label{inequality1}
z_{k^*(t),t}\le z_{k^*(t)}^\text{UCB}(\mathcal{R}_{t-1})\le z_k^\text{UCB}(\mathcal{R}_{t-1})
\le z_k^\text{UCB}(\mathcal{R}_{t-1})+z_{k,t}-z_k^\text{LCB}(\mathcal{R}_{t-1})
\end{equation}
must hold in this case.
Thus, if Equation (\ref{conditionForConfidenceInterval}) holds,
inequality (\ref{inequality1}) implies $N_k(t)\leq O(\ln t)$.
Then, $\sum_{k\in[K]}N_k(t)=O(\ln t)$.
Therefore, combining it with Equation (\ref{inequality2}),
Proposition \ref{PropositionUpperBound} is established since all possible cases are covered.

\begin{remark}
The improvement of this proof over the previous studies \citep{auer2002finite,kikkawa2024materials} lies in focusing on the width of the confidence interval and identifying the conditions under which $N(t)=O(\ln t)$ holds, without providing an explicit expression for the confidence bounds.
\end{remark}

\subsection{Proof for Proposition \ref{propositionLowerBound}}

From Definition \ref{definitionBanditProblem},
the probability of an event $N(t)\leq a_t$ at time $t$
under the original and modified distributions in condition (c) can be written as:
\begin{equation}\label{originalProbability}
\mathbb{P}[N(t)\leq a_t]=\int_{N(t)\leq a_t} \prod_{\tau\in[t]}f_{k(\tau)}(r_\tau)dr_\tau \prod_{x\in X_\tau}U(x)dx,
\end{equation}
\begin{equation}\label{modifiedProbability}
\mathbb{P}'[N(t)\leq a_t]=\int_{N(t)\leq a_t}\prod_{\tau\in[t]}f_{k(\tau)}'(r_\tau)dr_\tau \prod_{x\in X_\tau}U(x)dx.
\end{equation}
for any $z_{k,\tau}$ which can uniquely determine $k^*(\tau)$.
These probabilities are related to each other, as shown in the following lemma.

\begin{lemma}\label{relationPPPrime}
Assume $f_{k(\tau)}(r)\ll f_{k(\tau)}'(r)$ for all $\tau\in[t]$. Then,
\begin{equation}
\mathbb{P}[N(t)\leq a_t]\leq e^{N(t)M}\mathbb{P}'[N(t)\leq a_t],
\end{equation}
holds almost surely as $t\to\infty$ when setting $\theta_{k^*,k^*}'=\theta_{k^*}$.
The factor $M$ is given as
\begin{equation}\label{definitionM}
M:=\lim_{t\to\infty}\left[\frac{1}{N(t)}\sum_{k,k^*\in[K]}N_{k,k^*}(t)\KL(f_k,f_k')\right]=O(1)
\end{equation}
for $N(t)$, where $N_{k,k^*}(t):=\sum_{\tau\in[t]}\mathbb{I}[k=k(t)\cap k^*=k^*(t)]$
and $\KL(f,g)$ is the Kullback-Leibler (KL) divergence of $f$ from $g$.
\end{lemma}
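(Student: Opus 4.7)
The plan is to deploy the Lai--Robbins style change-of-measure argument. First I would use the absolute continuity hypothesis to define the likelihood ratio
\begin{equation*}
\Lambda_t := \prod_{\tau\in[t]}\frac{f_{k(\tau)}(r_\tau)}{f'_{k(\tau)}(r_\tau)},
\end{equation*}
so that Equations (\ref{originalProbability}) and (\ref{modifiedProbability}) combine to yield $\mathbb{P}[N(t)\leq a_t]=\mathbb{E}'\bigl[\Lambda_t\,\mathbb{I}[N(t)\leq a_t]\bigr]$, where $\mathbb{E}'$ is expectation under the modified measure. The convention $\theta'_{k^*,k^*}=\theta_{k^*}$ makes $f_{k(\tau)}\equiv f'_{k(\tau)}$ on every optimal round, so the product collapses to the failure rounds, giving
\begin{equation*}
\log\Lambda_t = \sum_{k,k^*\in[K]}\sum_{\tau\in T_{k,k^*}(t)}\log\frac{f_k(r_\tau)}{f'_k(r_\tau)},
\end{equation*}
where $T_{k,k^*}(t)$ indexes the $N_{k,k^*}(t)$ rounds on which a failure of type $(k,k^*)$ occurred. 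It therefore suffices to show $\log\Lambda_t\leq N(t)M$ asymptotically on the event $\{N(t)\leq a_t\}$.

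Next I would apply the strong law of large numbers (SLLN) arm-pair by arm-pair. The rewards observed on the rounds in $T_{k,k^*}(t)$ are i.i.d.\ draws from $f_k$, because the conditional distribution of $r_\tau$ given $k(\tau)=k$ is the fixed $f_k$, independent of what rendered the pull a failure. Hence, whenever $N_{k,k^*}(t)\to\infty$,
\begin{equation*}
\frac{1}{N_{k,k^*}(t)}\sum_{\tau\in T_{k,k^*}(t)}\log\frac{f_k(r_\tau)}{f'_k(r_\tau)}\to\KL(f_k,f'_k)\quad\text{almost surely},
\end{equation*}
with residual $o(N_{k,k^*}(t))$. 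Summing over the finitely many pairs gives $\log\Lambda_t=\sum_{k,k^*}N_{k,k^*}(t)\KL(f_k,f'_k)+o(N(t))$, which by the definition of $M$ in Equation (\ref{definitionM}) equals $N(t)M(1+o(1))$. Inserting this bound into the change-of-measure identity yields $\Lambda_t\leq e^{N(t)M}$ on $\{N(t)\leq a_t\}$ in the almost-sure limit, and hence the claimed inequality. The boundedness $M=O(1)$ follows because the bracket in Equation (\ref{definitionM}) is a convex combination over the finitely many pairs of the fixed constants $\KL(f_k,f'_k)$.

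The main obstacle I foresee is the treatment of arm pairs for which $N_{k,k^*}(t)$ does not diverge, coupled with making rigorous the phrase ``almost surely as $t\to\infty$'' for an inequality whose right-hand side depends on the random variable $N(t)$. For pairs visited only finitely often the corresponding block is almost surely bounded and can be absorbed into the $o(N(t))$ residual whenever $N(t)\to\infty$; the case where $N(t)$ itself stays bounded is trivial because $\log\Lambda_t$ is then bounded too and the inequality follows for any sufficiently large constant. A further technical point is that the classical SLLN requires $\log(f_k/f'_k)$ to be integrable under $f_k$, which is implicit in the finiteness of $\KL(f_k,f'_k)$; if one instead prefers a martingale argument, a mild second-moment hypothesis on the log-ratios would license Chow's SLLN and recover the same conclusion with the adaptive arm selection handled cleanly through the predictable compensator.
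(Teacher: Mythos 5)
Your proposal is correct and follows essentially the same route as the paper: the Lai--Robbins change of measure $\mathbb{P}[N(t)\leq a_t]=\mathbb{E}'\left[\Lambda_t\,\mathbb{I}[N(t)\leq a_t]\right]$, the collapse of the likelihood ratio to failure rounds via $\theta_{k^*,k^*}'=\theta_{k^*}$, and the law of large numbers applied pair-by-pair to conclude $\log\Lambda_t\leq N(t)M$ almost surely, with $M=O(1)$ as a convex combination of finitely many KL divergences. The only (cosmetic) difference is that the paper routes the bound through an auxiliary threshold $b_t$ and a split of the integral, whereas you bound $\Lambda_t$ directly on the event; your explicit treatment of finitely-visited pairs and of bounded $N(t)$ is if anything slightly more careful than the paper's.
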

\begin{proof}
From the definitions
\begin{align}
\mathbb{P}[N(t)\leq a_t]
&=\int_{N(t)\leq a_t} e^{L_t}\prod_{\tau\in[t]}f_{k(\tau)}'(r_\tau)dr_\tau\prod_{x\in X_\tau}U(x)dx \\
&=\left[\int_{N(t)\leq a_t\cap L_t\leq b_t}+\int_{N(t)\leq a_t\cap L_t>b_t}\right]
e^{L_t}\prod_{\tau\in[t]}f_{k(\tau)}'(r_\tau)dr_\tau\prod_{x\in X_\tau}U(x)dx \\
&\leq e^{b_t} \mathbb{P}'[N(t)\leq a_t]+\mathbb{P}[N(t)\leq a_t\cap L_t>b_t] \label{inequality3}
\end{align}
for any $b_t$, where $L_t:=\sum_{k,k^*\in[K]}
\sum_{n\in[N_{k,k^*}(t)]}\ln[f_k(r_n)/f_k'(r_n)]$.
Here,
\begin{equation}\label{lawOfLargeNumbers}
\frac{1}{N_{k,k^*}(t)}\sum_{n\in[N_{k,k^*}(t)]}\ln\frac{f_k(r_n)}{f_k'(r_n)}\to\KL(f_k,f_k')<\infty
\end{equation}
as $t\to\infty$, as a result of the law of large numbers under the assumption of the absolute continuity.
Thus, $L_t\leq N(t)M$ holds almost surely as $t\to\infty$,
since $M=M\left(\left\{\theta_{k,k^*}'\right\}_{k,k^*\in[K]}\right)$ satisfies Equation (\ref{definitionM}).
Because $N(t)=\sum_{k,k^*\in[K]\cap k\neq k^*}N_{k,k^*}(t)$
and $\KL(f_{k^*},f_{k^*}')=0$ when $\theta_{k,k^*}'=\theta_{k^*}$, $M=O(1)$ for $N(t)$.

When $L_t\leq N(t)M$ holds, the second term of inequality (\ref{inequality3}) becomes,
\begin{equation}\label{probability1}
\mathbb{P}[N(t)\leq a_t\cap L_t>b_t]=\mathbb{P}[N(t)\leq a_t\cap L_t>b_t\cap L_t\leq N(t)M].
\end{equation}
Considering the event of probability (\ref{probability1}), $b_t<N(t)M$ should be satisfied.
Then, this probability vanishes to $0$ if $b_t\ge N(t)M$.
Applying this result to inequality (\ref{inequality3}), Lemma \ref{relationPPPrime} is obtained.
\end{proof}
Using Lemma \ref{relationPPPrime} and the Markov inequality,
\begin{align}
\mathbb{P}[N(t)\leq a_t]
&\leq e^{N(t)M}\mathbb{P}'[N(t)\leq a_t]=e^{N(t)M}\mathbb{P}'[t-N(t)\ge t-a_t] \notag \\
&\leq e^{N(t)M}\frac{\mathbb{E}'[t-N(t)]}{t-a_t} \label{inequality4}
\end{align}
holds almost surely,
where $\mathbb{E}'[y]$ is the expectation of random variable $y$ under $\mathbb{P}'$.
Here, from Definition \ref{definitionConsistency}, $\mathbb{E}'[t-N(t)]=o(t^\delta)$ for any consistent policy
if $\left\{\theta_{k(t),k^*(t)}'\right\}_{k,k^*\in[K]}$ satisfies $z_{k(\tau),\tau}'>z_{k^*(\tau),\tau}$
when $k(\tau)\neq k^*(\tau)$ for each $\tau\in[t]$.
Then, the probability in Equation (\ref{inequality4}) vanishes to $0$ as $t\to\infty$ when $a_t\ge (1-\delta)(\ln t)/M$.
In other words, $N(t)>(1-\delta)(\ln t)/M$ holds almost surely as $t\to\infty$.
Considering the worst case of $\delta$, we obtain $N(t)>(\ln t)/M$.
For this lower bound, a smaller $M$ should be taken for faster convergence of $\mathbb{P}[N(t)\leq a_t]$
under $z_{k(\tau),\tau}'>z_{k^*(\tau),\tau}$ for each $\tau\in[t]$.
Then, we obtain the following lemma.

\begin{lemma}\label{lemmaMTilde}
$N(t)>(\ln t)/\tilde{M}$ holds almost surely as $N(t)\to\infty$ for any consistent policy and oracle quantity,
where
\begin{equation}
\tilde{M}:=\inf
\left\{M\left(\left\{\theta_{k,k^*}'\right\}_{k,k^*\in[K]}\right)
\middle|z_{k(\tau),\tau}'>z_{k^*(\tau),\tau},f_{k(\tau)}(r)\ll f_{k(\tau)}'(r)\right\}.
\end{equation}
\end{lemma}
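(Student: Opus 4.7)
The plan is to promote the bound $N(t) > (1-\delta)(\ln t)/M$, which the preceding paragraph already establishes for each admissible modified parameter set, to a bound in terms of the infimum $\tilde{M}$. The critical observation is that the random variable $N(t)$ is a functional of the realized trajectory under the \emph{true} distribution and policy only; the modified parameters $\{\theta_{k,k^*}'\}$ enter solely through the change-of-measure analysis used to derive the bound. Consequently, we are free to choose any admissible modified parameter set in hindsight for each sample path.

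The key steps, in order, are as follows. First, I would recall from the derivation just above the lemma that for every admissible $\{\theta_{k,k^*}'\}$ (that is, one satisfying $z_{k(\tau),\tau}' > z_{k^*(\tau),\tau}$ when $k(\tau) \neq k^*(\tau)$ together with $f_{k(\tau)} \ll f_{k(\tau)}'$), one has $N(t) > (1-\delta)(\ln t)/M$ almost surely as $t \to \infty$ for each $\delta \in (0,1)$. Second, by definition of the infimum, I can fix a countable sequence of admissible parameter sets with associated constants $M_n \downarrow \tilde{M}$. Since the event of failure for each $n$ has probability zero, the countable union is still a null set, so simultaneously $N(t) > (1-\delta)(\ln t)/M_n$ holds for every $n$ almost surely as $t \to \infty$. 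Third, passing to the limit $n \to \infty$ yields $N(t) \geq (1-\delta)(\ln t)/\tilde{M}$ almost surely, and a second countable limit $\delta \downarrow 0$ (along $\delta = 1/m$) promotes this to $N(t) \geq (\ln t)/\tilde{M}$ almost surely. Strict inequality follows because $N(t)$ is integer-valued while $(\ln t)/\tilde{M}$ is generically non-integer, so the inequality can be sharpened at each $t$ up to a negligible set of $t$.

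The main obstacle I anticipate is verifying that the infimum defining $\tilde M$ is achieved or approached within the admissible class in a way that preserves the hypotheses used by Lemma \ref{relationPPPrime}. In particular, one has to check that a sequence $M_n \downarrow \tilde M$ can be realized by parameter sets that uniformly respect both the strict inequality $z_{k(\tau),\tau}' > z_{k^*(\tau),\tau}$ along the true trajectory and the absolute continuity condition, rather than merely attaining the infimum in the limit where one of these constraints degenerates. This can be handled by restricting to a sequence of parameter sets bounded away from the boundary of admissibility by some vanishing margin, noting that the resulting $M_n$ still decrease to $\tilde M$ by definition of the infimum, since $\tilde M$ is itself defined over the open admissible class. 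Once this technicality is resolved, the rest of the argument is a routine countable-intersection passage to the limit, and the conclusion of the lemma follows.
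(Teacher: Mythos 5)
Your proposal follows essentially the same route as the paper: it takes the bound $N(t)>(1-\delta)(\ln t)/M$ already derived for each admissible modified parameter set and passes to the infimum $\tilde M$ (the paper compresses this into ``considering the worst case of $\delta$'' and ``a smaller $M$ should be taken,'' while you spell out the countable sequence $M_n\downarrow\tilde M$ and $\delta\downarrow 0$ with the null-set union). The extra care is welcome and nothing in your argument diverges from the paper's logic.
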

Considering the existence of $\tilde{M}$, we obtain Proposition \ref{propositionLowerBound}.

\begin{remark}
This proof is almost similar to the previous proofs \citep{lai1985asymptotically,bartlett2014learning}.
Our main claim is the difference of $z_{k,t}$ is absorbed into the definition of $N(t)$ 
and only appears in the search area of the infimum of $\tilde{M}$ in general. 
Note that the modified parameter set, $\theta_{k,k^*}$, has $K^2$ elements
due to the time dependency of $k^*$.
\end{remark}

\section{Discussion}

Theorem \ref{UCBTheorem} implies that order-optimal UCB algorithms for maximizing $z_{k,t}$
can be created only by providing $z_{k,t}$, $z_k^\text{UCB}(\mathcal{R}_t)$, 
and the function shape of $f_k(r)$ to satisfy the three conditions in this theorem.
Especially, if the support of $f(r;\theta)$ is the same regardless of $\theta$, 
the absolute continuity in condition (c) holds.
Also, if $z_{k,t}$ is continuous on an open interval,
the uniqueness of $z_{k,t}$ in condition (a)
and existence of $z_{k(\tau),\tau}'>z_{k^*(\tau),\tau}$ in condition (c) holds almost surely.
Then, we obtain the following corollary which indicates the conditions
that the confidence interval must satisfy for the UCB policy to be order-optimal:

\begin{corollary}\label{simpleCorollary}
Assume the support of $f(r;\theta)$ is the same regardless of $\theta$.
Also, assume $z_{k,t}$ is continuous on an open interval.
Then, the UCB policy ensures $N(t)=\varTheta(\ln t)$ almost surely
if the $k(t):=\argmax_{k\in[K]}z_k^\text{UCB}(\mathcal{R}_{t-1})$
is uniquely determined almost surely and there exists $d>0$
such that equation (\ref{conditionForConfidenceInterval}) holds with  $\alpha(t)=O(t^{-1})$.
\end{corollary}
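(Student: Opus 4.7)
The plan is to reduce to Theorem~\ref{UCBTheorem} by showing that its three conditions (a), (b), (c) are all implied by the hypotheses of the corollary; once this is done, the conclusion $N(t)=\varTheta(\ln t)$ follows immediately from that theorem.

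First I would dispatch conditions (a) and (b) quickly. Condition (b) is literally the hypothesis on (\ref{conditionForConfidenceInterval}), so there is nothing to do. For condition (a), the uniqueness of $\argmax_{k\in[K]}z_k^\text{UCB}(\mathcal{R}_{t-1})$ is assumed explicitly. The uniqueness of $k^*(t)=\argmax_{k\in[K]}z_{k,t}$ follows from the continuity of $z_{k,t}$ on an open interval: for any pair $k\neq k'$, the event $\{z_{k,t}=z_{k',t}\}$ is a zero-measure coincidence under the underlying reward/parameter law because $z_{k,t}-z_{k',t}$ is continuous and non-degenerate on that interval, and a union bound over the $\binom{K}{2}$ pairs of arms yields almost-sure uniqueness.

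The substantive work lies in verifying condition (c). Absolute continuity $f_{k(\tau)}(r)\ll f_{k(\tau)}'(r)$ is immediate from the common-support hypothesis: any set of zero $f_{k(\tau)}'$-measure must lie outside the common support, where $f_{k(\tau)}$ also vanishes, so no $f_{k(\tau)}$-mass can be lost under the change of measure. For the strict inequality $z_{k(\tau),\tau}'>z_{k^*(\tau),\tau}$, I would use the continuity of $z_{k,t}$ on an open interval to perturb the parameter $\theta_{k(\tau)}$ within that interval so that $z_{k(\tau),\tau}$ is pushed past the threshold $z_{k^*(\tau),\tau}$; the remaining parameters, including $\theta_{k^*,k^*}'=\theta_{k^*}$, may be kept unchanged, consistent with the setup used in Lemma~\ref{relationPPPrime}.

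I expect the main obstacle to be making the perturbation argument for condition (c) fully rigorous at this level of abstraction. Continuity of $z_{k,t}$ on an open interval does not by itself single out a direction in parameter space along which the oracle quantity can be strictly increased; the cleanest route is to reinterpret the phrase \emph{continuous on an open interval} as saying that the range of $z_{k,t}$ as a function of $\theta_{k(\tau)}$ over that open interval is itself an open interval, so by an intermediate-value argument it contains values exceeding any fixed threshold, producing the required $\theta_{k(\tau),k^*(\tau)}'$. Once this point is spelled out, all three conditions of Theorem~\ref{UCBTheorem} hold and the corollary is established.
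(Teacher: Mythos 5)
Your proposal takes essentially the same route as the paper: the paper derives the corollary by asserting, in the paragraph preceding it, that common support gives the absolute continuity in condition (c) and that continuity of $z_{k,t}$ on an open interval gives the uniqueness in condition (a) and the existence of $z_{k(\tau),\tau}'>z_{k^*(\tau),\tau}$ in condition (c), with condition (b) assumed outright. Your write-up is in fact more explicit than the paper's, and the obstacle you flag --- that continuity alone does not obviously yield a parameter perturbation pushing $z_{k(\tau),\tau}'$ past the threshold without an intermediate-value-type reading of the hypothesis --- is a gap the paper itself leaves unaddressed.
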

Using this corollary, we discuss the order optimality of existing and newly proposed bandit problems and their UCB policies as follows.

\subsection{Order optimality of existing UCB policies for classical bandit problem}

The well-known UCB policy for the classical bandit problem \citep{auer2002finite} is based on the fact
that the confidence interval for the mean $\mu$ with $\alpha(t)=O(t^{-1})$ is written as
\begin{equation}
\bar{\mu}_k(\mathcal{R}_t)-c\sqrt{\beta_t}<\mu_k<\bar{\mu}_k(\mathcal{R}_t)+c\sqrt{\beta_t},
\end{equation}
where $\bar{\mu}_k(\mathcal{R}_t)$ is the estimated mean from the $N_k(t)$ rewards obtained from the arm $k$,
and $c$ is a positive constant that determines the balance of exploration and exploitation.
In this case, the width of the interval, $2c\sqrt{\beta_t}$ is $O\left([\mu_{k^*}-\mu_k][\beta]^{1/2}\right)$.
Thus, from Corollary \ref{simpleCorollary} with $d=1/2$, the UCB policy is order-optimal.

The UCB policy for heavy-tailed distributions \citep{bubeck2013bandits} is based on the confidence interval as
\begin{equation}
\bar{\mu}_k(\mathcal{R}_t)-c\beta_t^\frac{\varepsilon}{1+\varepsilon}
<\mu_k<\bar{\mu}_k(\mathcal{R}_t)+c\beta_t^\frac{\varepsilon}{1+\varepsilon}
\end{equation}
 with $\alpha(t)=O(t^{-1})$, where $\varepsilon$ is a parameter related to the tail distribution.
The width of the interval is $2c\beta_t^{\varepsilon/(1+\varepsilon)}$.
Thus, from Corollary \ref{simpleCorollary} with $d=\varepsilon/(1+\varepsilon)$,
the UCB policy is order-optimal.

\subsection{Order optimality of a UCB policy for max bandit problem}

The MaxSearch algorithms were recently proposed by \citet{kikkawa2024materials} as a UCB policy for the max bandit problem.
Their approach corresponds to the case
where $z_{k,t}=\int_{r_{t-1}^\text{max}}^\infty S_k(u)du$ in our framework,
where $S_k(r)$ is the survival function of $f_k(r)$.
Under the assumption of the Gaussian reward distributions,
they show that $\sqrt{2}z_{k,t}=
\sigma_k\ierfc(\zeta_{k,t})$,
where $\ierfc(x):=\int_x^\infty\erfc(x)dx=\exp(-x^2)/\sqrt{\pi}-x\erfc(x)$,
$\erfc(x)$ is the complementary error function,
and $\zeta_{k,t}:=(r_{t-1}^\text{max}-\mu_k)/\sqrt{2}\sigma_k$.
Then, the confidence interval of $z_{k,t}$ is written as:
\begin{equation}\label{EIConfidenceInterval}
\hat{\sigma}_{k,-}\ierfc\frac{r_{t-1}^\text{max}-\hat{\mu}_{k,-}}{\sqrt{2}\hat{\sigma}_{k,-}}
<\sqrt{2}z_{k,t}<\hat{\sigma}_{k,+}\ierfc\frac{r_{t-1}^\text{max}-\hat{\mu}_{k,+}}{\sqrt{2}\hat{\sigma}_{k,+}},
\end{equation}
where $\hat{\mu}_{k,-}<\mu_k<\hat{\mu}_{k,+}$ and $\hat{\sigma}_{k,-}^2<\sigma_k^2<\hat{\sigma}_{k,+}^2$
are the confidence interval of $\mu_k$ and $\sigma_k^2$ of the Gaussian distributions, respectively.
Therefore, the width of the interval becomes
$O(\exp(-\zeta^2_{s,t})\beta_t^{1/2})$ with $\alpha(t)=O(t^{-1})$,
which satisfies the condition in Corollary \ref{simpleCorollary} with $d=1/2$.

\subsection{Bandit problem to select the arm with the best probability of improvement}

As the last example of the use of Corollary \ref{simpleCorollary},
we consider the case with $z_{k,t}=S(r_{t-1}^\text{max})$.
This case corresponds to using PI from $r_{t-1}^\text{max}$
as the oracle quantity and UCB of PI as the selection index.
The concept of UCB of PI can be found in the early study
on the max bandit problem by \citet{streeter2006simple},
although their theoretical foundation remains incomplete.
By Corollary \ref{simpleCorollary},
the use of the UCB of PI is justified as the order-optimal approach
for a bandit problem to pull the arm with the best PI at each $t$.
Thus, we call this new bandit problem the PI bandit problem,
and the order-optimal policy as PIUCB policy.

Under the assumption of the Gaussian reward distributions,
the oracle quantity can be simply written as $z_{k,t}=\erfc(\zeta_{k,t})/2$.
Because $\erfc(x)$ is a monotonically decreasing function,
an oracle quantity $\tilde{z}_{k,t}=-\zeta_{k,t}$
also yields the same $k^*(t)$ of the original $z_{k,t}$ in the PI bandit problem.
The confidence bounds of $\tilde{z}_{k,t}$ is written by
$(\hat{\mu}_{k,-}-r_{t-1}^\text{max})/\hat{\sigma}_{k,-}
<\tilde{z}_{k,t}<(\hat{\mu}_{k,+}-r_{t-1}^\text{max})/\hat{\sigma}_{k,+}$.
Given the confidence levels $1-\alpha_\mu(t)$
and $1-\alpha_\sigma(t)$ for $\mu_k$ and $\sigma_k^2$, respectively,
these intervals can be directly calculated
using the $p$-th quantiles $t_{p,n}$ and $\chi^2_{p,n}$ of the student $t$ and $\chi^2$ distributions with $n$ degrees of freedom.
Then, we derive the PIUCB algorithm as follows:

\begin{algorithm}
\caption{PIUCB}\label{PIUCB}
\begin{algorithmic}[1]
\REQUIRE $\alpha_\mu(t)$ and $\alpha_\sigma(t)$
\STATE $\mathcal{R}\leftarrow\varnothing$
\WHILE{$|\mathcal{R}|<T$}
\STATE  Calculate $\bar{\mu}_k$, $\bar{\sigma}_k$, $N_k$, and $r^\text{max}_{t-1}$ from $\mathcal{R}$
\STATE  $\hat{\mu}_k\leftarrow\bar{\mu}_k+\bar{\sigma}_kt_{[1-\alpha_\mu(t)]/2,N_k-1}/\sqrt{N_k}$
\STATE  $\hat{\sigma}_k^2\leftarrow (n-1)\sigma_k^2/\chi_{\alpha_\sigma(t)/2,N_k-1}^2$
\STATE  $\hat{z}_k\leftarrow(\hat{\mu}_k-r^\text{max}_{t-1})/\hat{\sigma}_k$
\STATE $k\leftarrow\argmax_{k\in[K]}\hat{z}_k$ 
\STATE Pull the $k$-th arm and obtain reward $r$.
\STATE $\mathcal{R}\leftarrow\mathcal{R}\cup\left\{(k,r)\right\}$
\ENDWHILE
\end{algorithmic}
\end{algorithm}
Using the asymptotic approximations $\hat{\mu}_{k,\pm}\to\bar{\mu}_k\pm c_\mu \bar{\sigma}_k\sqrt{\beta_t}$
and $\hat{\sigma}_{k,\pm}\to\bar{\sigma}_k (1\pm c_\sigma \sqrt{\beta_t/8})$
for large $t$ at $\alpha_\mu(t)=t^{-c_\mu^2/2}$ and $\alpha(t)=t^{-c_\sigma^2/2}$ [4],
the confidence interval of $\tilde{z}_{k,t}$ is $O(r_{t-1}^\text{max}\beta_t^{1/2})$.
Then, in the case with $c_\mu,c_\sigma>\sqrt{2}$ and $c_\sigma^2\neq8c_\mu^2$,
the PIUCB policy satisfies the conditions in Corollary \ref{simpleCorollary} with $d=1/2$,
and this policy becomes order-optimal.
In the special case, $c_\sigma^2=8c_\mu^2$,
the confidence interval of $\tilde{z}_{k,t}$ becomes $O(r_{t-1}^\text{max}\beta_t)$
because of the cancellation of $\sqrt{\beta_t}$ terms.
In this case, the conditions in Corollary \ref{simpleCorollary} are satisfied with $d=1$.
Accordingly, the UCB policy which pulls the arm with $z_{k,t}^\text{UCB}$
gives $N(t)=\varTheta(\ln t)$ almost surely for $c_\mu,c_\sigma>\sqrt{2}$.

\begin{remark}
The max bandit problem and the PI bandit problem are similar but not completely identical.
For example, in the case of the two-armed bandit problem with the Gaussian reward distributions
with $(\mu_1,\sigma_1,\mu_2,\sigma_2,r_{t-1}^\text{max})=(0,1,-1,2,0.5)$,
the max bandit problem constructed by $\sqrt{2}z_{k,t}=\sigma_k\ierfc(\zeta_{k,t})$
gives $k^*(t)=2$ ($z_{1,t}\approx0.198<z_{2,t}\approx0.262$),
but the PI bandit problem constructed by $\tilde{z}_{k,t}=-\zeta_{k,t}$
gives $k^*(t)=1$ ($z_{1,t}\approx0.309>z_{2,t}\approx0.227$).
Even so, in the case with $r_{t-1}^\text{max}>1$,
both bandit problems give the same best arm, $k^*(t)=2$.
Accordingly, the PIUCB policy can be used for the max bandit problem
as well as the MaxSearch policy in the case when the reward distributions follow the Gaussian.
Because the PIUCB policy gives a simpler selection index than that of the MaxSearch policy,
it is valuable both in terms of implementation and theoretical understanding.
\end{remark}

\begin{remark}
The $\zeta_{k,t}$ is propotionally equivalent to the z-score, or standard score of $r_{t-1}^\text{max}$.
Then, the PIUCB algorithms aim to select the arm with a low z-score.
The arm with a low z-score has greater potential to improve the z-score than that with a high z-score.
Because the improvement of z-score corresponds to the improvement of $r_{t-1}^\text{max}$,
the PIUCB algorithms work well.
\end{remark}

\subsection{Relation to previous theory on lower bounds}

Lemma \ref{lemmaMTilde} is related to the previous studies for the regret lower bounds
in the classical total-reward bandit problem \citep{lai1985asymptotically,burnetas1996optimal,garivier2019explore}.
In our framework, the classical problem is characterized by the time-independent oracle quantity,
$z_{k,t}=z_k:=\mu_k$ and $k^*(t)=k^*:=\max_{k\in[K]}\mu_k$.
Then, the condition $z_{k(\tau),\tau}'>z_{k^*(\tau),\tau}$ in Lemma \ref{lemmaMTilde} simply corresponds to $\mu_k'>\mu_{k^*}$.

Assuming the Bernoulli reward distribution $f(r;\mu_k)$,
the magnitude of the KL divergence corresponds to the difference in $\mu_k$.
Then, we obtain $\KL(f_k,f_{k}')>\KL(f_k,f_{k^*})$ and $\KL(f_k,f_{k^*})>\KL(f_s,f_{k^*})$ when $k\neq k^*$.
Therefore, $\tilde{M}>\KL(f_s,f_{k^*})$ and $N(t)>(\ln t)/\KL(f_s,f_{k^*})$ hold almost surely,
which corresponds to the results of \citet{lai1985asymptotically}.
On the other hand, assuming arbitrary reward distributions,
the relation between the KL divergence and the mean remines uncertain.
Because of this, we can only state Lemma \ref{lemmaMTilde},
which corresponds to the results of \citet{burnetas1996optimal}.

By considering the law of large numbers for $N_k (t)$ after equation (\ref{lawOfLargeNumbers}),
\begin{equation}
b_t\ge\lim_{t\to\infty}\sum_{k,k^*\in[K]\cap k\neq k^*}\mathbb{E}[N_{k,k^*}(t)]\KL(f_k,f_k')
\ge kl(E[Z],E'[Z])
\end{equation}
is obtained instead of $b_t\ge N(t)M$,
where $kl(\mu,\mu')$ is the KL divergence of the Bernoulli distributions.
The last inequality corresponds to the fundamental inequality proposed by \citet{garivier2019explore}.
It is known that the results of \citet{lai1985asymptotically} or \citet{burnetas1996optimal}
can be obtained from the fundamental inequality by setting $Z$ to $N_s(t)/T$
in the classical bandit settings (See the original article \citep{garivier2019explore} for detail).

\section{Experiments}
\subsection{Performance on synthetic problems}
We examine the performance of PIUCB policies using three max bandit problems proposed by \citep{kikkawa2024materials}.
In the “easy” problem, one should pull arm 3 mostly among the three arms
with the Gaussian reward distributions of $(\mu_1,\sigma_1,\mu_2,\sigma_2,\mu_3,\sigma_3)=(1,1,0,2,-1,3)$.
The classical bandit algorithms will wrongly pull arm 1 because this arm has the best $\mu$,
although it has the smallest standard deviation.
The “difficult” problem is also constructed by the three Gaussian reward distributions
with $(\mu_1,\sigma_1,\mu_2,\sigma_2,\mu_3,\sigma_3)=(-0.2,1.1,0,1,-0.8,1.2)$.
If the total number of pulls $T$ satisfies $10^2\ll T\ll 10^9$,
the arm 1 with the medium mean and standard deviation is the best arm for the max bandit problem.
In the “unfavorable” problem constructed by the Gaussian reward distributions
with $(\mu_1,\sigma_1,\mu_2,\sigma_2,\mu_3,\sigma_3)=(1,1,0,1,-1,1)$,
the classical bandit algorithms can find the best arm 1 faster than the max bandit algorithms
because the best arm is determined only by the mean.
The max bandit algorithms estimate the standard deviations as well as the mean,
which requires more effort than estimating only the mean.

Using the three bandit problems, we compare the classical UCB algorithm,
the MaxSearch algorithm, and two PIUCB algorithms (Figure \ref{synthetic}).
We used the same implementations of UCB and MaxSearch[Gaussian] as the previous work [4] in our experiments.
We also used the PIUCB algorithms with $c_\mu=c_\sigma=\sqrt{2}$ and $(c_\mu,c_\sigma)=(1/2,\sqrt{2})$,
which correspond to the cases with $d=1/2$ and $d=1$ in Corollary \ref{simpleCorollary}, respectively.
The former and latter PIUCB algorithms are denoted as PIUCB1 and PIUCB2 in this paper.
Although PIUCB2 uses smaller $c_\mu$ than the theoretical lower bounds,
the results in the following paragraph suggest that PIUCB2 empirically outperforms MaxSearch and PIUCB1.

In the easy problem, the transitions of cumulative maximum, $r_t^\text{max}$,
of MaxSearch, PIUCB1, and PIUSB2 in Figure \ref{synthetic}a (left) exhibit nearly identical trends
and demonstrate better performance than the classical UCB algorithm.
The logarithmic plots of MaxSearch and PIUCBs for the numbers of failures, $N(t)$,
in a sense of the max bandit problem, shown in Figure \ref{synthetic}a (right),
generally show the linears behavior against $\ln t$ as expected by Corollary \ref{simpleCorollary}.
As expected, the failures of the classical UCB algorithm do not become linear.
The absolute counts of $N(t)$ for MaxSearch and PIUCB2 are significantly lower than those for PIUCB1,
which means that MaxSearch and PIUCB2 perform better than PIUCB1 for the easy problem.
In the results from the difficult problem shown in Figure \ref{synthetic}b,
MaxSearch and PIUCBs also outperform the classical UCB algorithm in the exploration for high $r_t^\text{max}$.
The failures of MaxSearch and PIUCBs are also smaller than the classical UCB algorithm.
The theoretically predicted linearity of $N(t)$ against $\ln t$ cannot be definitively confirmed from these experiments
because of the limitation of the number of trials.
The PIUCB2 algorithm demonstrates significantly higher performance than those of MaxSearch and PIUCB1 at $t\approx10,000$.
In the results from the unfavorable problem in Figure \ref{synthetic}c,
all algorithms have no significant difference in $r_t^\text{max}$.
However, according to the results of the number of failures,
the classical UCB fails less than $50$ times in the $10,000$ trials.
This result suggests that the classical UCB algorithm is suitable for the unfavorable problem as theoretically expected.
Among the UCB algorithms for the max bandit problems,
PIUCB2 outperforms MaxSearch and PIUCB1 at $t\gtrsim100$ in terms of failures.
Considering the outcomes of the three bandit problems,
PIUCB1 performs comparably to MaxSearch, and PIUCB2 performs as well as or better than MaxSearch.

\begin{figure}[htbp]
  \centering
  \includegraphics[width=\linewidth]{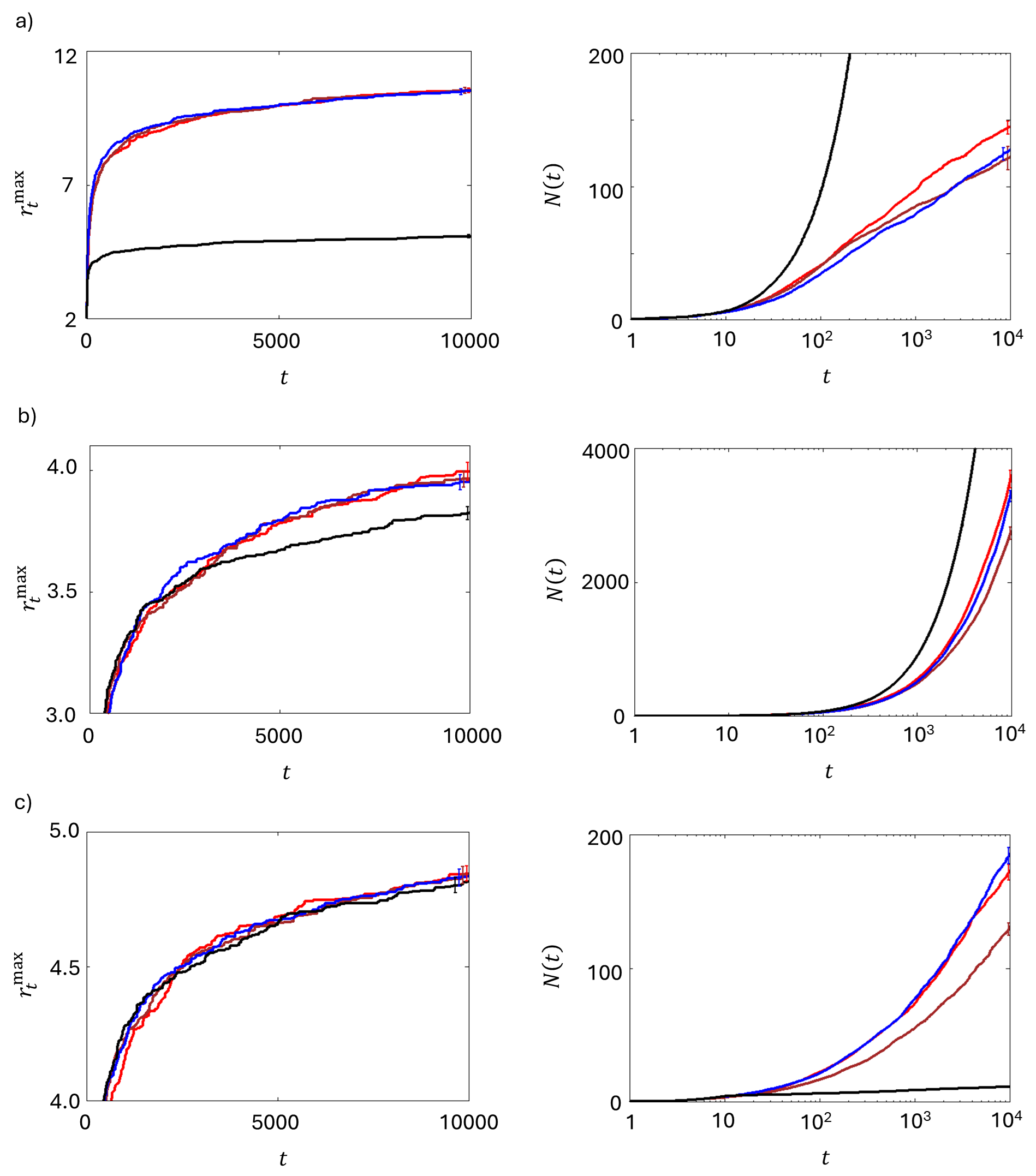}
  \caption{
  Transition plots of the cumulative maximum rewards (left),
  and the numbers of failures in a sense of the max bandit problem (right) for the three synthetic problems.
  The means and standard errors of each $100$ independent runs are shown in the graphs using colored lines and error bars.
  (a) the easy problem, (b) the difficult problem, (c) the unfavorable problem.
  The red, brown, blue, and black indicate PIUCB1, PIUCB2, MaxSearch, and classical UCB, respectively.
  }
  \label{synthetic}
\end{figure}

\subsection{Performance on material exploration}
We also examine the performance of the material exploration problems \citep{kikkawa2024materials}.
Each problem explores molecules with high physical property $A$ using MCTS \citep{kocsis2006bandit,browne2012survey}.
As the property $A$, the proposed problems use boiling temperature, $T_b$ [K],
critical pressure, $P_c$ [bar], viscosity, $\eta$ [Pa$\cdot$s],
or topological polar surface area, $A_\text{TPSA}$ [$\text{\AA}^2$] ($\text{\AA}=0.1$ nm),
calculated by empirical equations \citep{joback1987estimation,landrum2006rdkit}
of molecular structure described by SMILES strings \citep{weininger1988smiles}.
The SMILES strings are generated by the selections of the fragment strings
ruled by the context-free grammar \citep{hopcroft2001introduction} with the following substitution rules:
\begin{equation}
\begin{aligned}
S&\leftarrow \text{C}(X)(Y)(Y)(Y), \text{C}(\text{=O})(Y)(Y),
\text{C}(Y)(Y)(\text{=C}(Y)(Y)), \text{or } \text{C}(\text{=O})(\text{O}(Y))(Y) \\
X&\leftarrow[\text{H}], \text{F}, \text{Cl}, \text{Br}, \text{C}(X)(Y)(Y), \text{O}(Y),
\text{N}(Y)(Y), \text{C}(\text{=O})(Y), \text{C}(Y)(\text{=C}(Y)(Y)), \\
&\text{or } \text{C}(\text{=O})(\text{O}(Y)) \\
Y&\leftarrow [\text{H}], \text{F}, \text{Cl}, \text{Br}, \text{C}(X)(Y)(Y), \text{C}(\text{=O})(Y),
\text{C}(Y)(\text{=C}(Y)(Y)), \text{or } \text{C}(\text{=O})(\text{O}(Y)),
\end{aligned}
\end{equation}
where $S$ is the start strings. One repeatedly selects these substitutions until all $S$, $X$, and $Y$ vanish.
Then, one obtains a complete SMILES string, corresponding to a molecule structure.
Because all selections in this molecular generation process can be expressed by a game tree,
we can explore molecules using MCTS with one generation as one trial.
Employing the selection indices of bandit algorithms for each selection in MCTS,
we can examine the performance of the proposed selection indices for material exploration.
The original study pointed out that the $T_b$ maximizing problem
corresponds to the unfavorable problem described in the previous section,
where the UCB algorithm outperforms the MaxSearch algorithm.
In addition, the $\eta$ maximizing problem has enormously high values in the search space,
and the $A_\text{TPSA}$ maximizing problem probably has a bound near the value that can be easily discovered.
These differences allow us to verify the practical performance for non-normally distributed reward distributions.
Please see the original paper \citep{kikkawa2024materials} for details, especially for implementations of molecular generations and MCTS.

Figure \ref{material} shows the results of PIUCBs for exploring the molecules with high physical properties.
The results of MaxSearch obtained by \citet{kikkawa2024materials} are also shown in the graphs.
In the $T_b$ maximizing problem (Figure \ref{material}a),
PIUCB2 significantly outperformes both PIUCB1 and MaxSearch.
This is natural to consider the performance improvement of PIUCB2 for the unfavorable problem in Figure \ref{synthetic}c
and the correspondence between the $T_b$ maximizing and the unfavorable problem.
In the results of the $P_c$ maximizing problem in Figure \ref{material}b,
the PIUCB algorithms clearly demonstrate higher performance than MaxSearch.
Contrary to this, in the results of the $\eta$ maximizing problem shown in Figure \ref{material}c,
the MaxSearh algorithm probably outperforms PIUCBs,
although the significance cannot be concluded due to the large confidence interval.
The previous study discussed that the cause of this large interval
is the existence of extremely higher rewards in the $\eta$ maximizing problem.
In the $A_\text{TPSA}$ maximizing problem (Figure \ref{material}d),
PIUCBs shows higher performance near $T=3,000$ than MaxSearch.
However, the performance is reversed at $T=10,000$.
These results suggest that PIUCBs are comparable to MaxSearch and better than in some cases.

\begin{figure}[htbp]
  \centering
  \includegraphics[width=\linewidth]{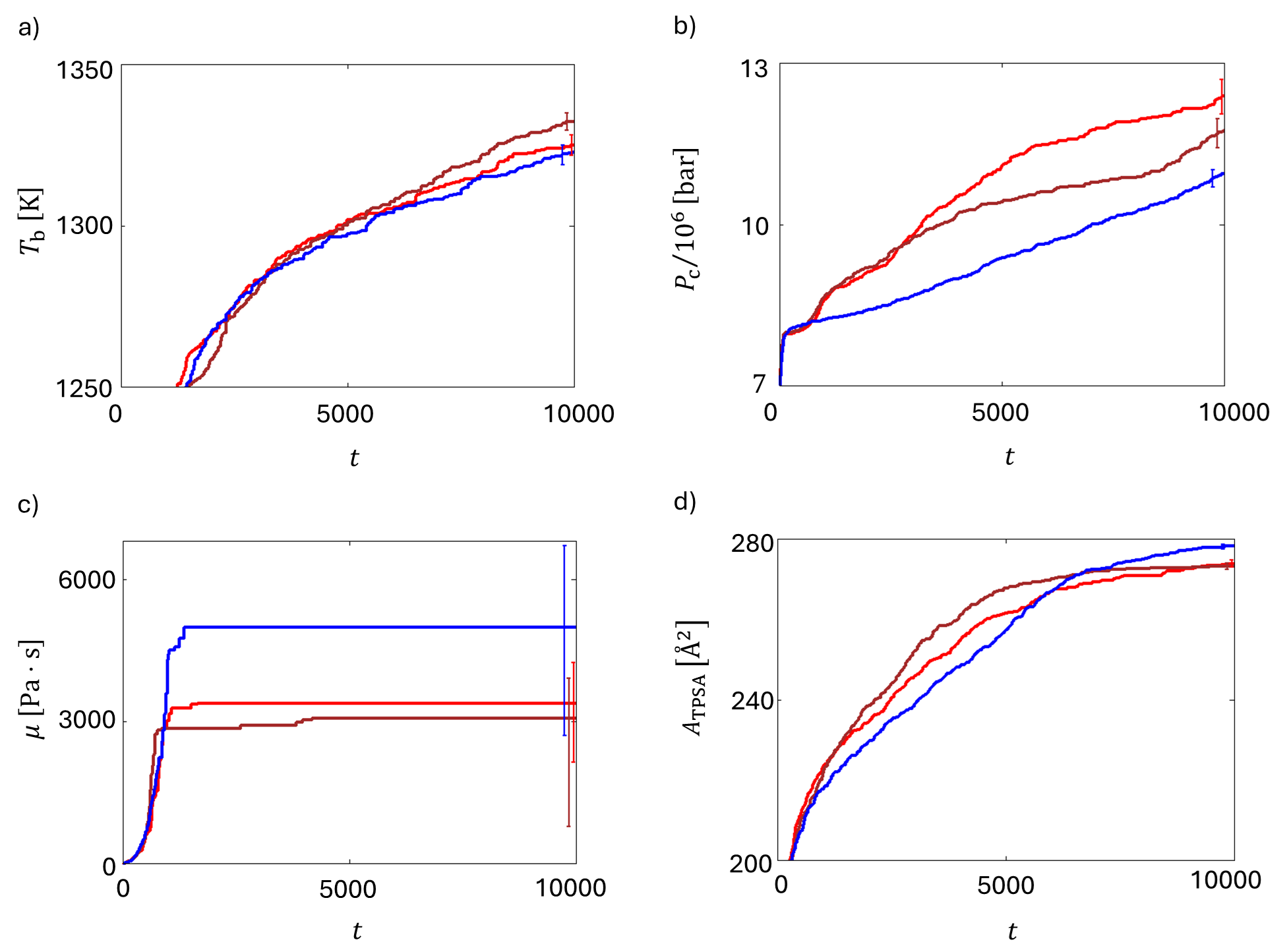}
  \caption{
  Transition plots of the cumulative maximum rewards for the demonstration problems of materials discovery.
  (a) the $T_b$ maximizing problem, (b) the $P_c$ maximizing problem, (c) the $\eta$ maximizing problem,
  (d) the $A_\text{TPSA}$ maximizing problem.
  The red, brown, and blue indicate PIUCB1, PIUCB2, and MaxSearch, respectively.
  The mean and standard errors of each $100$ independent runs are shown in (a), (b), and (d) using colored lines and error bars.
  In (c), the median values of $100$ independent runs and its $95 \%$ confidence intervals
  calculated by the bootstrap method \citep{efron1994introduction} are shown.
  The bootstrap is performed using scipy.stats.bootstrap function \citep{virtanen2020scipy} with $1,000$ resamples.
  }
  \label{material}
\end{figure}

\section{Summary}
In this paper, we generalize the proof of the order optimality of the UCB policy
as Theorem \ref{UCBTheorem} and Corollary \ref{simpleCorollary}
by introducing the oracle quantity $z_{k,t}$, which absorbs the difference of the target of bandit problems.
Using this generalized theory, we can easily confirm the order optimality
of existing UCB policies for both the total reward and max bandit problems.

Our theory also has a wide flexibility in determining the oracle quantity, $z_{k,t}$,
extending beyond the total-reward and max bandit problems.
As an example, we proposed the PIUCB algorithms,
which is order-optimal for the PI bandit problem.
These algorithms can be used for the max bandit problem in practical.
In fact, in the synthetic and demonstration max bandit problems \citep{kikkawa2024materials},
the PIUCB algorithms had comparable performances to the previously proposed MaxSearch algorithms and better than in some cases.
Considering the simplicity of the selection index of PIUCBs,
the PIUCB policies are valuable in terms of both implementation and theoretical understandings.

The flexibility of $z_{k,t}$ can lead to other UCB algorithms.
For example, higher moments of the reward distribution can be straightly used for $z_{k,t}$
and it will generates another UCB algorithm.
Because the $z_{k,t}$ is a function of the obtained rewards, 
we probably integrate the machine learning models theoretically in our framework.

Based on our theoretical foundation, it would be interesting to extend to adversarial settings,
the best-arm identification, and other applied bandit problem settings.
We hope that these studies will deepen the understanding of the bandit problems.

\acks{We thank Dr. Miyamoto in TCRDL for reviewing our work.}

\bibliography{main.bib}


\end{document}